\documentclass[10pt, conference, compsocconf]{IEEEtran}
% If IEEEtran.cls has not been installed into the LaTeX system files,
% manually specify the path to it like:
% \documentclass[conference]{../sty/IEEEtran}

% *** Do not adjust lengths that control margins, column widths, etc. ***
% *** Do not use packages that alter fonts (such as pslatex).         ***
% There should be no need to do such things with IEEEtran.cls V1.6 and later.
% (Unless specifically asked to do so by the journal or conference you plan
% to submit to, of course. )

\IEEEoverridecommandlockouts

% correct bad hyphenation here
\hyphenation{op-tical net-works semi-conduc-tor}

\usepackage{graphicx}
\usepackage{amsmath,amssymb} 
\setcounter{tocdepth}{3}
\usepackage{epstopdf}

\usepackage{caption}
\usepackage{subcaption}
\captionsetup{compatibility=false}

\usepackage{epsfig}

\usepackage{algorithm}
\usepackage{algpseudocode}

\def \xx {{\bf x}}
\def \yy {{\bf y}}
\def \ee {{\bf e}}

\def \ww {{\bf w}}
\def \XX {{\bf X}}
\def \YY {{\bf Y}}

\newtheorem{lemma}{Lemma}

\begin{document}

\title{Accelerating Deep Learning with Shrinkage and Recall}

\author{
\IEEEauthorblockN{
Shuai Zheng\IEEEauthorrefmark{1}, Abhinav Vishnu\IEEEauthorrefmark{2} and Chris Ding\IEEEauthorrefmark{1}
}
\IEEEauthorblockA{
\IEEEauthorrefmark{2}Pacific Northwest National Laboratory, Richland, WA, USA\\
abhinav.vishnu@pnnl.gov
}
\IEEEauthorblockA{
\IEEEauthorrefmark{1}Department of Computer Science and Engineering,\\ University of Texas at Arlington, TX, USA\\
zhengs123@gmail.com, chqding@uta.edu
}
\thanks{This work was conducted while the first author was doing internship
at Pacific Northwest National Laboratory, Richland, WA, USA.}
}

% make the title area
\maketitle

\begin{abstract}
Deep Learning is a very powerful machine learning model. Deep Learning trains a large number of parameters for multiple layers and is very slow when data is in large scale and the architecture size is large. Inspired from the shrinking technique used in accelerating computation of Support Vector Machines (SVM) algorithm and screening technique used in LASSO, we propose a shrinking Deep Learning with recall (sDLr) approach to speed up deep learning computation. We experiment shrinking Deep Learning with recall (sDLr) using Deep Neural Network (DNN), Deep Belief Network (DBN) and Convolution Neural Network (CNN) on 4 data sets. Results show that the speedup using shrinking Deep Learning with recall (sDLr) can reach more than $2.0$ while still giving competitive classification performance.
\end{abstract}

\begin{IEEEkeywords}
Deep Learning; Deep Neural Network (DNN); Deep Belief Network (DBN); Convolution Neural Network (CNN)
\end{IEEEkeywords}

\IEEEpeerreviewmaketitle

\section{Introduction}
Deep Learning \cite{hinton2006reducing} has become a powerful machine learning model. It differs from traditional machine learning approaches in the following aspects: Firstly, Deep Learning contains multiple non-linear hidden layers and can learn very complicated relationships between inputs and outputs. Deep architectures using multiple layers outperform shadow models \cite{bengio2009learning}. Secondly, there is no need to extract human design features \cite{krizhevsky2012imagenet}, which can reduce the dependence of the quality of human extracted features. We mainly study three Deep Learning models in this work: Deep Neural Networks (DNN), Deep Belief Network (DBN) and  Convolution Neural Network (CNN).

Deep Neural Network (DNN) is the very basic deep learning model. It contains multiple layers with many hidden neurons with non-linear activation function in each layer.  
Figure \ref{fig:dnn} shows one simple example of Deep Neural Network (DNN) model. This Deep neural network has one input layer, two hidden layers and one output layer. 
Training process of Deep Neural Network (DNN) includes forward propagation and back propagation. Forward propagation uses the current connection weight to give a prediction based on current state of model. Back propagation computes the amount of weight should be changed based on the difference of ground truth label and forward propagation prediction. Back propagation in Deep Neural Network (DNN) is a non-convex problem. Different initialization affects classification accuracy and convergence speed of models. 

Several unsupervised pretraining methods for neural network have been proposed to improve the performance of random initialized DNN, such as using stacks of RBMs (Restricted Boltzmann Machines) \cite{hinton2006reducing}, autoencoders \cite{vincent2010stacked}, or DBM (Deep Boltzmann Machines) \cite{salakhutdinov2009deep}. Compared to random initialization, pretraining followed with finetuning backpropagation will improve the performance significantly. 
Deep Belief Network (DBN) is a generative unsupervised pretraining network which uses stacked RBMs \cite{hinton2002training} during pretraining. A DNN with a corresponding configured DBN often produces much better results. DBN has undirected connections between its first two layers and directed connections between all its lower layers\cite{deng2012three} \cite{salakhutdinov2009deep}.   

Convolution Neural Network (CNN) \cite{lecun1998gradient} \cite{krizhevsky2012imagenet} \cite{lecun1995convolutional} has been proposed to deal with images, speech and time-series. This is because standard DNN has some limitations. Firstly, images, speeches are usually large. A simple Neural Network to process an image size of $100 \times 100$ with 1 layer of 100 hidden neurons will require 1,000,000 ($100 \times 100 \times 100$) weight parameters. With so many variables, it will lead to overfitting easily. Computation of standard DNN model requires expensive memory too. Secondly, standard DNN does not consider the local structure and topology of the input. For example, images have strong 2D local structure. Many areas in the image are similar. Speeches have a strong 1D structure, where variables temporally nearby are highly correlated. CNN forces the extraction of local features by restricting the receptive fields of hidden neurons to be local \cite{lecun1995convolutional}.

However, the training process for deep learning algorithms, including DNN, DBN, CNN, is computationally expensive. This is due to the large number of training data and a large number of parameters for multiple layers. 
Inspired from the \emph{shrinking} technique \cite{joachims1999making} \cite{narasimhan2014fast} used in accelerating computation of Support Vector Machines (SVM) algorithm and screening \cite{wang2013lasso} \cite{bonnefoy2014dynamic} technique used in LASSO, we propose an accelerating algorithm shrinking Deep Learning with Recall (sDLr). The main contribution of sDLr is that it can reduce the running time significantly. Though there is a trade-off between classification improvement and speedup on training time, for some data sets, sDLr approach can even improve classification accuracy. It should be noted that the approach sDLr is a general model and a new way of thinking, which can be applied to both large data, large network and small data small network, both sequential and parallel implementations. We will study the impact of proposed accelerating approaches on DNN, DBN and CNN using 4 data sets from computer vision and high energy physics, biology science.

\begin{figure}[t]
  \centering
  \includegraphics[width=0.8\columnwidth]{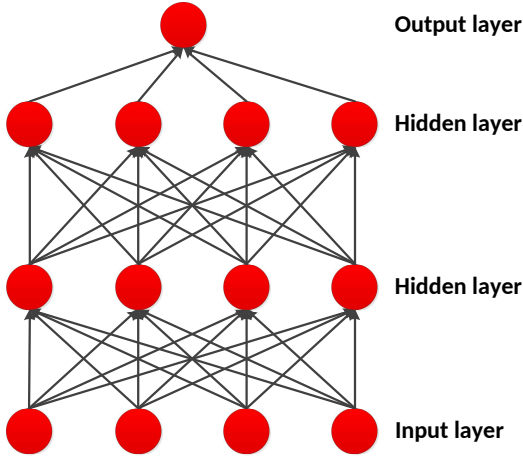}
  \caption{Deep Neural Network (DNN).}
  \label{fig:dnn}
\end{figure}

\section{Motivation}
The amount of data in our world has been exploding. Analyzing large data sets, so-called big data, will become a key basis of competition, underpinning new waves of productivity growth, innovation, and consumer interest \cite{manyika2011big}. A lot of big data technologies, including cloud computing, dimensionality reduction have been proposed \cite{zheng2015closed, zheng2014kernel, williams2014tidewatch, zhang2012virtual, zheng2011analysis}. Analyzing big data with machine learning algorithms requires special hardware implementations and large amount of running time. 

SVM \cite{suykens1999least} solves the following optimization problem:
\begin{align}
\label{eq:svm} 
&\min_{\ww,{\bf \xi},b} \frac{1}{2} \ww^T \ww + C \sum_{i=1}^n \xi_i,\\
\text{subject to } & y_i (\ww^T \Phi(\xx_i) - b) > 1 - \xi_i, \nonumber \\
& \xi_i >0, \; i = 1, ..., n, \nonumber
\end{align}%
where $\xx_i$ is a training sample, $y_i$ is the corresponding label, $\xi_i$ is positive slack variable, $\Phi(\xx_i)$ is mapping function, $\ww$ gives the solution and is known as weight vector, $C$ controls the relative importance of maximizing the margin and minimizing the amount of the slack. Since SVM learning problem has much less support vectors than training examples, shrinking \cite{joachims1999making} \cite{narasimhan2014fast} was proposed to eliminate training samples for large learning tasks where the fraction of support vectors is small compared to the training sample size or when many support vectors are at the upper bound of Lagrange multipliers.

LASSO \cite{tibshirani1996regression} is an optimization problem to find sparse representation of some signals with respect to a predefined dictionary. It solves the following problem:
\begin{align}
\label{eq:lasso} 
\min_{\xx} \frac{1}{2} \|D\xx - \yy \|^2_2 + \lambda \| \xx \|_1,
\end{align}%
where $\yy$ is a testing point, $D \in \Re^{p \times n}$ is a dictionary with dimension $p$ and size $n$, $\lambda$ is a parameter controls the sparsity of representation $\xx$. When both $p$ and $n$ are large, which is usually the case in practical applications, such as denoising or classification, it is difficult and time-intensive to compute. Screening \cite{wang2013lasso} \cite{bonnefoy2014dynamic} is a technique used to reduce the size of dictionary using some rules in order to accelerate the computation of LASSO.

Either in shrinking of SVM or in screening of LASSO, these approaches are trying to reduce the size of computation data. Inspired from these two techniques, we propose a faster and reliable approach for deep learning, shrinking Deep Learning. 

\section{Shrinking Deep Learning}
Given testing point $\xx_i \in \Re^{p \times 1}$, $i = 1, 2, ..., n$, let class indicator vector be $\yy_i^{(0)} \in \Re^{1 \times c}$, where $n$ is number of testing samples, $c$ is number of classes, $\yy_i$ has all $0$s except one $1$ to indicate the class of this test point. Let the output of a neural network for testing point $\xx_i$ be $\yy_i \in \Re^{1 \times c}$. $\yy_i$ contains continuous values and is the $i$th row of $\YY$. 

\subsection{Standard Deep Learning} Algorithm \ref{alg:dl} gives the framework of standard deep learning. During each epoch (iteration), standard deep learning first runs a forward-propagation on all training data, then computes the output $\YY(\ww)$, where output $\YY$ is a function of weight parameters $\ww$. Deep learning tries to find an optimal $\ww$ to minimize error loss $\ee = [e_1, ..., e_n]$, which can be sum squared error loss (DNN, DBN in our experiment) or softmax loss (CNN in our experiment). In backpropagation process, deep learning updates weight parameter vector using gradient descent. For an training data $\xx_i$, gradient descent can be denoted as:  
\begin{align}
\ww^{(epoch+1)} = \ww^{(epoch)} - \eta \nabla e_i( \ww^{(epoch)} ),
\label{eq:wupdate}
\end{align}%
where $\eta$ is step size. 

Before we present shrinking Deep Leaning algorithm, we first give Lemma \ref{lm:lm1}.

\begin{lemma}
Magnitude of gradient $\nabla e_i( \ww^{(epoch)})$ in Eq.(\ref{eq:wupdate}) is positive correlated with the error $e_i, \; i =1, ...,n$.
\label{lm:lm1}
\end{lemma}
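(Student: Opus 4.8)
The plan is to expose the dependence of the per-sample gradient in Eq.~(\ref{eq:wupdate}) on the error through the chain rule, isolating the factor that carries the error-dependence from the factor that depends only on the current weights and the input. Writing $\yy_i=\yy_i(\ww)$ for the network output on $\xx_i$ and $J_i=\partial\yy_i/\partial\ww$ for its Jacobian with respect to the weight vector, the chain rule gives $\nabla e_i(\ww)=J_i^{\top}\,\partial e_i/\partial\yy_i$. The key observation is that $\partial e_i/\partial\yy_i$ carries essentially all of the error dependence, whereas $J_i$ depends on $\ww$ and $\xx_i$ but not on the ground-truth indicator $\yy_i^{(0)}$, so the error enters the gradient magnitude only through $\|\partial e_i/\partial\yy_i\|$.

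First I would treat the sum-squared-error case used for DNN and DBN, where $e_i=\tfrac12\|\yy_i-\yy_i^{(0)}\|_2^2$. Here $\partial e_i/\partial\yy_i=\yy_i-\yy_i^{(0)}$, hence $\|\partial e_i/\partial\yy_i\|_2=\|\yy_i-\yy_i^{(0)}\|_2=\sqrt{2e_i}$, a strictly increasing function of $e_i$. Combining with the chain rule and bounding the Jacobian factor yields $\sigma_{\min}(J_i)\sqrt{2e_i}\le\|\nabla e_i(\ww)\|_2\le\sigma_{\max}(J_i)\sqrt{2e_i}$; since the singular values of $J_i$ vary only mildly across samples within an epoch, $\|\nabla e_i(\ww)\|_2$ is, up to this roughly common factor, proportional to $\sqrt{2e_i}$ and therefore positively correlated with $e_i$.

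Next I would handle the softmax-loss case used for CNN. Writing $\mathbf{z}_i$ for the pre-softmax logits and $\mathbf{p}_i$ for the softmax output, the standard identity $\partial e_i/\partial\mathbf{z}_i=\mathbf{p}_i-\yy_i^{(0)}$ holds; if $\xx_i$ has true class $j$ then $e_i=-\log p_{ij}$ and $\|\partial e_i/\partial\mathbf{z}_i\|_2^2=(1-p_{ij})^2+\sum_{k\ne j}p_{ik}^2$, which grows as $p_{ij}$ decreases, i.e.\ as $e_i$ increases. The same Jacobian-bounding step as above then transfers this monotone relationship from the layer input $\mathbf{z}_i$ to $\|\nabla e_i(\ww)\|_2$, giving the claimed positive correlation.

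The main obstacle is precisely the Jacobian term $J_i$: it genuinely differs from sample to sample and from epoch to epoch, so the lemma cannot be an exact functional identity and must be read as a correlation. The plan is therefore to argue that, because $J_i$ does not involve the label and the network is trained on homogeneous data, its spectral norm and smallest singular value are comparable across samples, so the ordering that $e_i$ induces on the output-layer gradient magnitudes is preserved, to good approximation, after multiplication by $J_i^{\top}$. A secondary subtlety is that in the softmax case $e_i\mapsto\|\partial e_i/\partial\mathbf{z}_i\|_2$ is monotone but not linear (it saturates as $p_{ij}\to0$), so only positive correlation — not proportionality — is available there, which is consistent with the wording of Lemma~\ref{lm:lm1}.
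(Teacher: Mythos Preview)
Your approach is essentially the same as the paper's: both apply the chain rule to write $\nabla e_i(\ww)$ as the Jacobian $\nabla\yy_i(\ww)$ acting on the output-layer signal, and then argue that this signal---$(\yy_i-\yy_i^{(0)})$ for squared error, $(\mathbf p_i-\yy_i^{(0)})$ for softmax---is what carries the dependence on $e_i$. The paper stops at the qualitative observation that the gradient is ``linearly related'' to $(\yy_i-\yy_i^{(0)})$ and, for softmax, checks only the limiting case $p_{ij}\to1$; you go further by making the dependence quantitative ($\|\partial e_i/\partial\yy_i\|_2=\sqrt{2e_i}$ in the squared-error case), by bounding the effect of the Jacobian via its singular values, and by flagging explicitly that the per-sample Jacobian is the reason the statement is only a correlation rather than an identity. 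So your proposal subsumes the paper's argument and is more careful about its scope, but the underlying idea is identical.
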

\begin{proof}
In the case of sum squared error, error loss of sample $\xx_i$ is given as:
\begin{align}
e_i = \frac{1}{2} \sum_{k=1}^c (y_{ik}(\ww) - y_{ik}^{(0)} )^2, \; i =1, ...,n. \label{eq:err}
\end{align}%
Using Eq.(\ref{eq:err}), gradient $\nabla e_i(\ww)$ is:
\begin{align}
\nabla e_i(\ww) &= \sum_{k=1}^c (y_{ik}(\ww) - y_{ik}^{(0)} ) \nabla y_{ik}(\ww) \nonumber \\
\nabla e_i(\ww) &= \nabla \yy_{i}(\ww) (\yy_{i} - \yy_{i}^{(0)})^T. \label{eq:errProp}
\end{align}%
As we can see from Eq.(\ref{eq:errProp}), $\nabla e_i(\ww)$ is linear related to $(\yy_{i} - \yy_{i}^{(0)})$. Data points with larger error will have larger gradient, thus will have a stronger and larger correction signal when updating $\ww$. Data points with smaller error will have smaller gradient, thus will have a weaker and smaller correction signal when updating $\ww$.

In the case of softmax loss function, $e_i$ is denoted as:
\begin{align}
e_i &= - \sum_{k=1}^c y_{ik}^{(0)} log p_{ik}, \; i =1, ...,n  \label{eq:errsoftmax}\\
p_{ik} &= \frac{exp(y_{ik}(\ww))}{\sum_{j=1}^c exp(y_{ij}(\ww))}. 
\end{align}%
Using Eq.(\ref{eq:errsoftmax}), gradient $\nabla e_i(\ww)$ is:
\begin{align}
\nabla e_i(\ww) &= \sum_{k=1}^c \frac{\partial e_i(y_{ik})}{\partial y_{ik}} \nabla y_{ik}(\ww), \nonumber \\
\nabla e_i(\ww) &= \sum_{k=1}^c (p_{ik} - y_{ik}^{(0)}) \nabla y_{ik}(\ww).
\label{eq:errPropSoftmax}
\end{align}%
Now let's see the relation between softmax  loss function (Eq.(\ref{eq:errsoftmax})) and its gradient with respect to weight parameter $\ww$ (Eq.(\ref{eq:errPropSoftmax})). For example, given point $i$ is in class $1$, so $y_{i1}^{(0)} = 1$ and $y_{ij}^{(0)} = 0,\; j\not=1$. When $p_{i1}$ is large, $p_{i1} \to 1$, softmax  loss function (Eq.(\ref{eq:errsoftmax})) is very small. For gradient of softmax loss function (Eq.(\ref{eq:errPropSoftmax})), when $k=1$, $(p_{i1} - 1)$ is close to $0$; when $k\not=1$, $(p_{i1} - 0)$ is also close to $0$. In summary, when softmax  loss function (Eq.(\ref{eq:errsoftmax})) is very small, its gradient (Eq.(\ref{eq:errPropSoftmax})) is also very small.
\end{proof}

\begin{algorithm}[t!]
\small
\caption{Deep Learning (DL)}
\label{alg:dl}
\begin{algorithmic}[1]
\Require Data matrix $\XX \in \Re^{p \times n}$, class matrix $\YY^{(0)} \in \Re^{n \times c}$
\Ensure Classification error 
\State Preprocessing training data
\State Active training data index $\mathbb{A} = \{1, 2, ..., n\}$
\For {$epoch =1,2,...$}
	\State Run forward-propagation on $\mathbb{A}$
	\State Compute forward-propagation output $\YY \in \Re^{n \times c}$
	\State Run back-propagation
	\State Update weight $\ww$ using Eq.(\ref{eq:wupdate})
\EndFor
\State Compute classification error using $\YY$ and $\YY^{(0)}$
\end{algorithmic}
\end{algorithm}

\begin{algorithm}[t!]
\small
\caption{Shrinking Deep Learning (sDL)}
\label{alg:sdl}
\begin{algorithmic}[1]
\Require Data matrix $\XX \in \Re^{p \times n}$, class matrix $\YY^{(0)} \in \Re^{n \times c}$, elimination rate $s$ ($s$ is a percentage), stop threshold $t$
\Ensure Classification error 
\State Preprocessing training data
\State Active training data index $\mathbb{A} = \{1, 2, ..., n\}$
\For {$epoch =1,2,...$}
	\State Run forward-propagation on $\mathbb{A}$
	\State Compute forward-propagation output $\YY \in \Re^{n \times c}$
	\State Run back-propagation
	\State Update weight $\ww$ using Eq.(\ref{eq:wupdate})
	%\NoNumber \hrulefill
	\If {$n_{epoch} >= t$}
		\State Compute error using Eq.(\ref{eq:err}) 
		\State Compute set $\mathbb{S}$, which contains indexes of $n_{epoch}s$  smallest $e_i$ values ($n_{epoch}$ is size of $\mathbb{A}$ in current epoch) 
		\State Eliminate all samples in $\mathbb{S}$ and update $\mathbb{A}$, $\mathbb{A} = \mathbb{A} - \mathbb{S}$ 
	\EndIf
	%\NoNumber \hrulefill
\EndFor
\State Compute classification error using $\YY$ and $\YY^{(0)}$
\end{algorithmic}
\end{algorithm}

\begin{algorithm}[t!]
\small
\caption{Shrinking Deep Learning with Recall (sDLr)}
\label{alg:sdlr}
\begin{algorithmic}[1]
\Require Data matrix $\XX \in \Re^{p \times n}$, class matrix $\YY^{(0)} \in \Re^{n \times c}$, elimination rate $s$ ($s$ is a percentage), stop threshold $t$
\Ensure Classification error 
\State Preprocessing training data
\State Active training data index $\mathbb{A} = \{1, 2, ..., n\}$, $\mathbb{A}_0 = \mathbb{A}$
\For {$epoch =1,2,...$}
	\State Run forward-propagation on $\mathbb{A}$
	\State Compute forward-propagation output $\YY \in \Re^{n \times c}$
	\State Run back-propagation
	\State Update weight $\ww$ using Eq.(\ref{eq:wupdate})
	%\NoNumber \hrulefill
	\If {$n_{epoch} >= t$}
		\State Compute error using Eq.(\ref{eq:err}) 
		\State Compute set $\mathbb{S}$, which contains indexes of $n_{epoch}s$  smallest $e_i$ values ($n_{epoch}$ is size of $\mathbb{A}$ in current epoch) 
		\State Eliminate all samples in $\mathbb{S}$ and update $\mathbb{A}$, $\mathbb{A} = \mathbb{A} - \mathbb{S}$ 
	\Else 
	\State Use all data for training, $\mathbb{A} = \mathbb{A}_0$
	\EndIf
	%\NoNumber \hrulefill
\EndFor
\State Compute classification error using $\YY$ and $\YY^{(0)}$
\end{algorithmic}
\end{algorithm}

\subsection{Shrinking Deep Learning} In order to accelerate computation and inspired from techniques of shrinking in SVM and screening of LASSO, we propose shrinking Deep Learning in Algorithm \ref{alg:sdl} by eliminating samples with small error (Eq.(\ref{eq:err})) from training data and use less data for training. 

Algorithm \ref{alg:sdl} gives the outline of shrinking Deep Learning (sDL). Compared to standard deep learning in Algorithm \ref{alg:dl}, sDL requires two more inputs, elimination rate $s$ and stop threshold $t$. $s$ is a percentage indicating the amount of training data to be eliminated during one epoch, $t$ is a number indication to stop eliminating training data when $n_{epoch} < t$, where $n_{epoch}$ is current number of training data. We maintain an index vector $\mathbb{A}$. In Algorithm \ref{alg:dl}, both forward and backward propagation apply on all training data. In Algorithm \ref{alg:sdl}, the training process is applied on a subset of all training data. In the first epoch, we set $\mathbb{A} = \{1, 2, ..., n \}$ to include all training indexes. After forward and backward propagation in each epoch, we select the $n_{epoch}s$ indexes of training data with smallest error $e_i$, where $n_{epoch}$ is size of current number of training data $\mathbb{A}$. Then we eliminate indexes in $\mathbb{S}$ from $\mathbb{A}$, and update $\mathbb{A}$, $\mathbb{A} = \mathbb{A} -  \mathbb{S}$. When $n_{epoch} < t$, we stop eliminating training data anymore. Lemma \ref{lm:lm1} gives theoretical foundation that samples with small error will smaller impact on the gradient. Thus eliminating those samples will not impact the gradient significantly.  
Figure \ref{fig:mnist_ex} shows that the errors using sDL is smaller than errors using DL, which proves that sDL gives a stronger correction signal and reduce the errors faster.

When eliminating samples, elimination rate $s$ denotes the percentage of samples to be removed. We select the $n_{epoch}s$ indexes of training data with smallest error $e_i$. For the same epoch, in different batches, the threshold used to eliminate samples is different. Assume there are $n_{batch}$ batches one epoch, in every batch, we need to drop $n_{epoch}s/n_{batch}$ samples on average. In batch $i$, let the threshold to drop $n_{epoch}s/n_{batch}$ smallest error be $t_i$; in batch $i+1$, let the threshold be $t_{i+1}$. $t_i$ and $t_{i+1}$ will differ a lot. We use exponential smoothing \cite{gardner1985exponential} to adjust the threshold used in batch $i+1$: instead of using $t_{i+1}$ as the threshold to eliminate samples, we use the following $t'_{i+1}$:
\begin{align}
\label{eq:expsm} 
t'_{i+1} = \alpha t'_i + (1-\alpha) t_{i+1},
\end{align}
where $\alpha \in [0, 1)$ is a weight parameter which controls the importance of past threshold values, $t'_1 = t_1$. The intuition using exponential smoothing is that we want the threshold used in each epoch to be consistent. Samples with errors less than $t'_{i+1}$ in batch $i+1$ will be eliminated. If $\alpha$ is close to 0, the smoothing effect on threshold is not obvious; if $\alpha$ is close to 1, the threshold $t'_{i+1}$ will deviate a lot from $t_{i+1}$.  In practical, we find $\alpha$ between $0.5$ and $0.6$ is a good setting in terms of smoothing threshold. We will show this in experiment part.

\begin{figure}[t!]
  \centering
  \includegraphics[width=0.8\columnwidth]{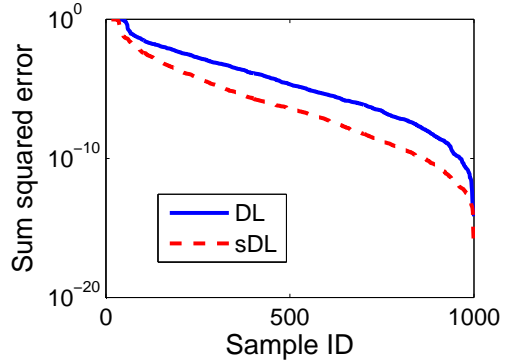}
  \caption{Sum squared errors (Eq.(\ref{eq:err})) using sDNN (shrinking DNN) is smaller than errors using standard DNN in the same epoch on 1000 samples from MNIST data.}
  \label{fig:mnist_ex}
\end{figure}

\section{Shrinking with Recall}
As the training data in sDL becomes less and less, the weight parameter $\ww$ trained is based on the subset of training data. It is not optimized for the entire training dataset. We now introduce shrinking Deep Learning with recall (Algorithm \ref{alg:sdlr}) to deal with this situation. In order to utilize all the training data, when the number of active training samples $n_{epoch} < t$, we start to use all training samples, as shown in Algorithm \ref{alg:sdlr}, $\mathbb{A} = \mathbb{A}_0$. Algorithm \ref{alg:sdlr} ensures that the model trained is optimized for the entire training data.
Shrinking with recall of Algorithm \ref{alg:sdlr} will produce competitive classification performance with standard Deep Learning of Algorithm \ref{alg:dl}. 
In experiment, we will also investigate the impact the threshold $t$ on the classification results (see Figure \ref{fig:sdnnr_t}). 

\begin{figure}[h!]
\centering
  \begin{subfigure}{0.9\columnwidth}
  \centering
  \includegraphics[trim=0.5cm 0.5cm 0.5cm 0.5cm, clip=true, width=\textwidth]{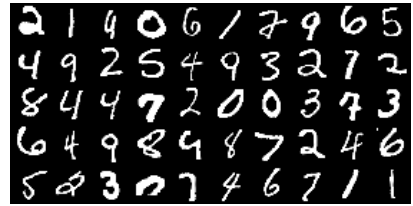}
  \caption{MNIST (10 classes, size $28\times 28$, randomly select 50 images).}
  \label{fig:mnist}
  \end{subfigure}
    \begin{subfigure}{0.9\columnwidth}
  \centering
  \includegraphics[trim=0.9cm 1cm 0.9cm 0.1cm, clip=true,width=0.98\textwidth]{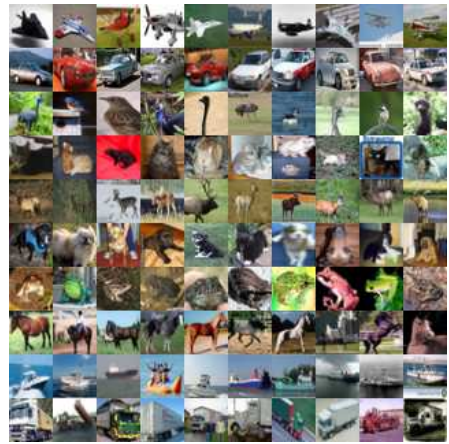}
  \caption{CIFAR-10 (10 classes in total, size $32\times 32$, each row is a class, randomly select 10 images from each class).}
  \label{fig:cifar10}
  \end{subfigure}
  \caption{Sample images.}
  \label{fig:sample_img}
\end{figure}

\begin{figure}[t!]
\centering
  \begin{subfigure}{0.65\columnwidth}
  \centering
  \includegraphics[width=\textwidth]{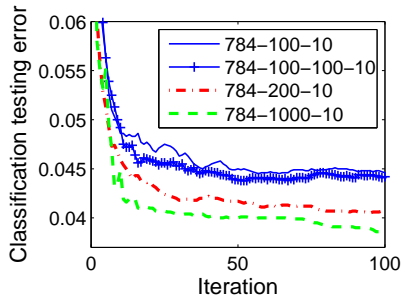}
  \caption{Testing error.}
  \label{fig:mnist_network}
  \end{subfigure}
    \begin{subfigure}{0.65\columnwidth}
  \centering
  \includegraphics[width=\textwidth]{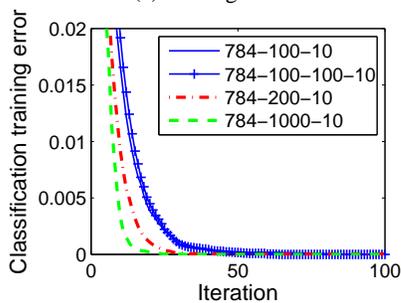}
  \caption{Training error.}
  \label{fig:mnist_network_training}
  \end{subfigure}
  \caption{MNIST DNN testing and training error on different network (100 iterations/epochs).}
  \label{fig:mnist_network0}
\end{figure}

\begin{figure}[h!]
\centering
  \begin{subfigure}{0.65\columnwidth}
  \centering
  \includegraphics[width=\textwidth]{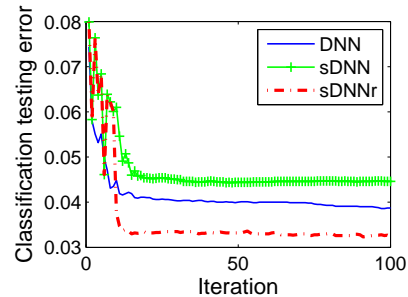}
  \caption{Testing error.}
  \label{fig:mnist_test}
  \end{subfigure}
    \begin{subfigure}{0.65\columnwidth}
  \centering
  \includegraphics[width=\textwidth]{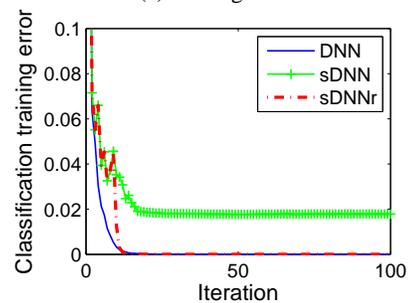}
  \caption{Training error.}
  \label{fig:mnist_train}
  \end{subfigure}
  \caption{MNIST testing and training error (100 iterations/epochs).}
  \label{fig:mnist_result}
\end{figure}

\begin{figure}[h!]
\centering
  \begin{subfigure}{0.65\columnwidth}
  \centering
  \includegraphics[width=\textwidth]{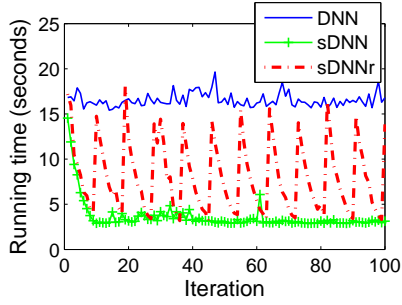}
  \caption{Training time.}
  \label{fig:mnist_time}
  \end{subfigure}
    \begin{subfigure}{0.65\columnwidth}
  \centering
  \includegraphics[width=\textwidth]{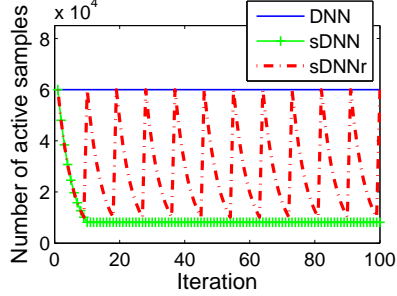}
  \caption{Number of active samples.}
  \label{fig:mnist_num}
  \end{subfigure}
  \caption{MNIST training time and number of active samples (100 iterations/epochs).}
  \label{fig:mnist_timenum}
\end{figure}

\begin{figure}[h!]
  \centering
  \includegraphics[width=0.65\columnwidth]{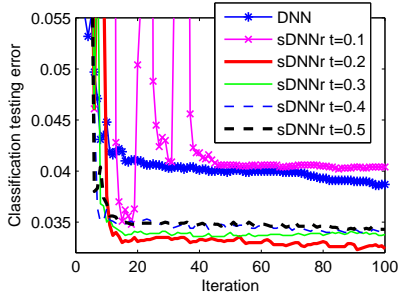}
  \caption{MNIST classification testing error using different recall threshold $t$ (100 iterations/epochs).}
  \label{fig:sdnnr_t}
\end{figure}

\begin{table}[t!]
\centering
\caption{Overview of data sets used in this paper.}
\label{tab:data}
\resizebox{1\columnwidth}{!}{
\begin{tabular}{c|ccccc}
\hline\hline
Dataset  & Dimensionality  & Training Set  & Testing Set   \\
\hline
%\rowcolor{LightCyan}
MNIST   & 784 ($28 \times 28$ grayscale) & 60K & 10K \\
%\rowcolor{babypink}
CIFAR-10 & 3072 ($32 \times 32$ color)  & 50K & 10K  \\
%\rowcolor{LightCyan}
%CIFAR-100& 3072 ($32 \times 32$ color)  & 50K & 10K  \\
%\rowcolor{babypink}
Higgs Boson  & 7 & 50K & 20K \\
Alternative Splicing & 3446   & 2500 & 945 \\
\hline\hline
\end{tabular}
}
\end{table}

\begin{table*}[ht!]
\centering
\caption{MNIST classification error improvement (IMP) and training time Speedup.}
\label{tab:mnistresult}
\resizebox{1\textwidth}{!}{
\begin{tabular}{c|ccc|ccc|ccc}
\hline\hline
Method                       &  DNN & sDNNr & IMP/Speedup & DBN & sDBNr &IMP/Speedup &  CNN & sCNNr & IMP/Speedup \\
\hline
%\rowcolor{LightCyan}
Testing error & 0.0387 & 0.0324 & $16.3\%$    & 0.0192 &0.0182& $5.21\%$ &0.0072 & 0.0073 &$-1.39\%$ \\
%\rowcolor{babypink}
Training time (s)            & 1653  & 805     & $2.05$    & 1627   &700   & $2.32$  & 3042   & 1431 & $2.13$ \\
\hline\hline
\end{tabular}}
\end{table*}

\section{Experiments}
In experiment, we test our algorithms on data sets of different domains using 5 different random initialization. The data sets we used are listed in Table \ref{tab:data}. MNIST is a standard toy data set of handwritten digits; CIFAR-10 
%and CIFAR-100 are 
contains tiny natural images; Higgs Boson is a dataset from high energy physics.
Alternative Splicing is RNA features used for predicting alternative gene splicing. 
We use DNN and DBN implementation from \cite{palm2012prediction} and CNN implementation from \cite{vedaldi2014matconvnet}. All experiments were conducted on a laptop with Intel Core i5-3210M CPU 2.50GHz, 4GB RAM, Windows 7 64-bit OS.

\subsection{Results on MNIST}
MNIST is a standard toy data set of handwritten digits containing 10 classes. It contains 60K training samples and 10K testing samples. The image size is $784$ (grayscale $28 \times 28$). Figure \ref{fig:mnist} shows some examples of MNIST dataset.

\subsubsection{Deep Neural Network} In experiment, we first test on some network architecture and find a better one for our further investigations. 
Figure \ref{fig:mnist_network} and Figure \ref{fig:mnist_network_training} show the testing and training classification error for different network settings. 
%$784$ is input dimension; $10$ is output class number. For example, ``$784\mbox{-}100\mbox{-}10$" is a network with $100$ hidden neurons. 
Results show that ``$784\mbox{-}1000\mbox{-}10$" is a better setting with lower testing error and converges faster in training. We will use network ``$784\mbox{-}1000\mbox{-}10$" for DNN and DBN on MNIST. Learning rate is set to be 1; activation function is tangent function and output unit is sigmoid function. 

Figure \ref{fig:mnist_result} shows the testing error and training error of using standard DNN, sDNN (Shrinking DNN) and sDNNr (shrinking DNN with recall).  
Results show that sDNNr improves the accuracy of standard DNN. 
While for training error, both DNN and sDNNr give almost 0 training error.

Figure \ref{fig:mnist_timenum} shows training time and number of active samples in each iteration (epoch). In our experiments, for sDNN and sDNNr, we set eliminate rate $s = 20\%$. sDNNr has a recall process to use the the entire training samples, as shown in Figure \ref{fig:mnist_timenum}. When the number of active samples is less than $t = 20\% \times 60K$ of total training samples, we stop eliminating samples. The speedup using sDNNr compared to DNN is
\begin{align}
\label{eq:speedup} 
Speedup = \frac{t_{DNN}}{t_{sDNNr}} = 2.05.
\end{align}

Recall is a technique when the number of training samples is decreased to a threshold $t$, we start to use all training samples. There is a trade-off between speedup and classification error: setting a lower $t$ could reduce computation time more, but could increase classification error. Figure \ref{fig:sdnnr_t} shows the effect of using different recall threshold $t$ sDNNr on MNIST data. When we bring all training samples back at $t=20\% \times 60K$, we get the best testing error. It is worth noting that the classification error of sDNNr is improved compared to standard DNN, which could imply that there is less overfitting for this data set.

\begin{figure}[h!]
  \centering
  \includegraphics[width=0.8\columnwidth]{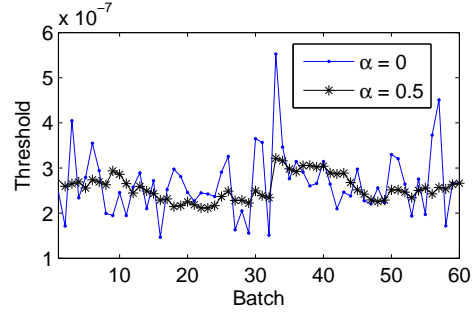}
  \caption{Exponential smoothing (see Eq.(\ref{eq:expsm})) effect on one epoch of MNIST (60 batches with 1000 samples/batch ).}
  \label{fig:expsm}
\end{figure}

Figure \ref{fig:expsm} shows an example of exponential smoothing on the elimination threshold (Eq.(\ref{eq:expsm})) during one epoch. The threshold using $\alpha=0.5$ smooths the curve a lot. 

\subsubsection{Deep Belief Network}
Figure \ref{fig:DBN} shows the classification testing error and training time of using Deep Belief Network (DBN) and shrinking DBN with recall (sDBNr) on MNIST. Network setting is same as it is in DNN experiment.
sDBNr further reduces the classification error of DBN to $0.0182$ by using sDBNr. 
%From Figure \ref{fig:dbnTime}, we see that the training time of sDBNr $700s$ is much less than the training time of standard DBN $1627s$, with speedup as $2.32$. 

\begin{figure}[t]
\centering
  \begin{subfigure}{0.65\columnwidth}
  \centering
  \includegraphics[width=\textwidth]{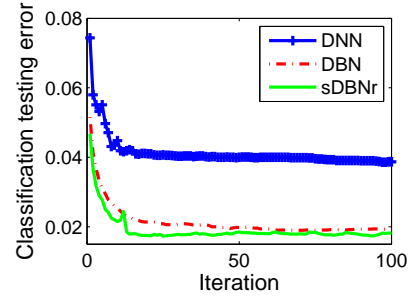}
  \caption{Testing error}
  \label{fig:dbnError}
  \end{subfigure}
  \begin{subfigure}{0.65\columnwidth}
  \centering
  \includegraphics[width=\textwidth]{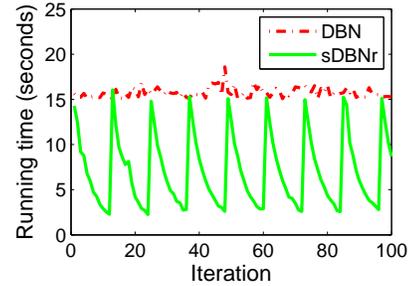}
  \caption{Training Time}
  \label{fig:dbnTime}
  \end{subfigure}
  \caption{MNIST DBN result(100 iterations/epochs): (a) compares classification testing error; (b) compares training time.}  
  \label{fig:DBN}
\end{figure}

\subsubsection{Convolution Neural Networks (CNN)}
The network architecture used in MNIST is 4 convolutional layers with each of the first 2 convolutional layers followed by a max-pooling layer, then 1 layer followed by a ReLU layer, 1 layer followed by a Softmax layer. The first 2 convolutional layers have $5 \times 5$ receptive field applied with a stride of 1 pixel. The 3rd convolutional layer has $4 \times 4$ receptive field and the 4th layer has $1 \times 1$ receptive field with a stride of 1 pixel. The max pooling layers pool $2 \times 2$ regions at strides of 2 pixels. Figure \ref{fig:cnn} shows the classification testing error and training time of CNN on MNIST data. 
%The training time of sCNNr $1431s$ is much less than standard CNN time $3042s$, with speedup $2.13$. 

\begin{figure}[t]
\centering
  \begin{subfigure}{0.65\columnwidth}
  \centering
  \includegraphics[width=\textwidth]{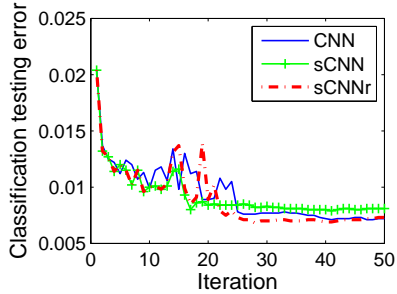}
  \caption{Testing error}
  \label{fig:cnnError}
  \end{subfigure}
  \begin{subfigure}{0.65\columnwidth}
  \centering
  \includegraphics[width=\textwidth]{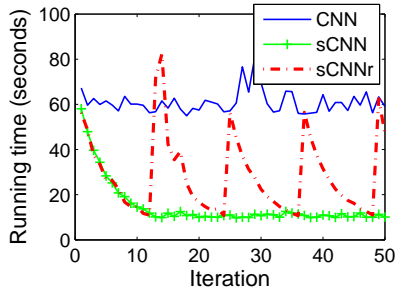}
  \caption{Training Time}
  \label{fig:cnnTime}
  \end{subfigure}
  \caption{MNIST CNN result(50 iterations/epochs): (a) compares classification testing error; (b) compares training time.}  
  \label{fig:cnn}
\end{figure}

Table \ref{tab:mnistresult} summarizes the classification error improvement (IMP) and training time speedup of DNN, DBN and CNN on MNIST data, where improvement is $IMP = (err_{DL}-err_{sDLr})/err_{DL}$.

\begin{table}[t!]
\centering
\caption{CIFAR-10 classification error improvement (IMP) and training time Speedup.}
\label{tab:c10}
\resizebox{\columnwidth}{!}{
\begin{tabular}{c|ccc}
\hline\hline
Method                       & CNN & sCNNr & IMP/Speedup \\
\hline
%\rowcolor{LightCyan}
Testing error (top 1) & 0.2070 & 0.2066  & $0.19\%$ \\
%\rowcolor{babypink}
Training time (s)            & 5571  & 3565     & $1.56$  \\
\hline\hline
\end{tabular}}
\end{table}

\subsection{Results on CIFAR-10}
CIFAR-10 \cite{krizhevsky2009learning} data contains 60,000 $32 \times 32$ color image in 10 classes, with 6,000 images per class. There are 50,000 training and 10,000 testing images. 
CIFAR-10 is an object dataset, which includes airplane, car, bird, cat and so on and classes are completely mutually exclusive. 
In our experiment, we use CNN network to evaluate the performance in terms of classification error. Network architecture uses 5 convolutional layers: for the first three layers, each convolutional layer is followed by a max pooling layer; $4$th convolutional layer is followed by a ReLU layer; the 5th layer is followed by a softmax loss output layer. Table \ref{tab:c10} shows the classification error and training time. Top-1 classification testing error in Table \ref{tab:c10} means that the predict label is determined by considering the class with maximum probability only.
%The testing error rate is almost same, but training time is reduced from $5571s$ to $3565s$, with speedup $1.56$.

\begin{table}[t!]
\centering
\caption{Higgs Boson classification error improvement (IMP) and training time Speedup.}
\label{tab:higgs}
\resizebox{\columnwidth}{!}{
\begin{tabular}{cc|ccc}
\hline\hline
DNN Network & Method       & DNN & sDNNr & IMP/Speedup \\
\hline
 $7\mbox{-}20\mbox{-}20\mbox{-}2$ &Testing error & 0.4759 & 0.4512 & $5.19\%$ \\
 %\rowcolor{babypink}
                & Training time (s) & 21  & 13     & $1.62$  \\
\hline
%\rowcolor{LightCyan}
 $7\mbox{-}50\mbox{-}2$ &Testing error & 0.3485 & 0.3386 & $2.84\%$ \\
 %\rowcolor{babypink}
             & Training time (s)  & 52  & 18     & $2.89$  \\
\hline\hline
\end{tabular}}
\end{table}

%\iffalse
\begin{table}[t]
\centering
\caption{Alternative Splicing error improvement (IMP) and training time Speedup.}
\label{tab:asdata}
\resizebox{\columnwidth}{!}{
\begin{tabular}{cc|ccc}
\hline\hline
DNN Network & Method       & DNN & sDNNr & IMP/Speedup \\
\hline
 $1389-100-3$ &Testing error & 0.2681  &  0.2960 & $10.4\% $ \\
                & Training time (s) &  32  &  20    & $1.60 $  \\
\hline\hline
\end{tabular}}
\end{table}
%\fi

\subsection{Results on Higgs Boson}
Higgs Boson is a subset of data from \cite{baldi2014searching} with $50,000$ training and $20,000$ testing. Each sample is a signal process which either produces Higgs bosons particle or not. We use 7 high-level features derived by physicists to help discriminate particles between the two classes. Both activation function and output function were sigmoid function. The DNN batchsize is $100$ and recall threshold $t=20\% \times 50,000$. We test on different network settings and choose the best. Table \ref{tab:higgs} shows the experiment results using different network. 
%On all network settings, we can get at least $1.62$ speedup by using sDNNr. 

%\iffalse
\subsection{Results on Alternative Splicing}
Alternative Splicing \cite{xiong2011bayesian} is a set of RNA sequences used in bioinfomatics. It contains 3446 cassette-type mouse exons with 1389 features per exon.
We randomly select 2500 exons for training and use the rest for testing. 
For each exon, the dataset contains three real-valued positive prediction targets $y_i = [ q^{inc}\;\; q^{exc} \;\;q^{nc}]$, corresponding to probabilities that the exon is more likely to be included in the given tissue, more likely to be excluded, or more likely to exhibit no change relative to other tissues. To demonstrate the effective of proposed shrinking Deep Learning with recall approach, we use a simple DNN network of different number of layers and neurons with optimal tangent activation function and sigmoid output function. 
We use the following average sum squared error criteria to evaluate the model performance $error = \sum_{i=1}^{n} \| \yy_i - \yy_i^{(0)} \|^2 / n$,
where $\yy_i$ is the predict vector label and $\yy_i^{(0)}$ is the ground-truth label vector, $n$ is number of samples. The DNN batchsize is $100$ and recall threshold $t=20\% \times 2500$. We test on different network settings and choose the best. Table \ref{tab:asdata} shows the experiment result.  

%\fi

\section{Conclusion}
In conclusion, we proposed a shrinking Deep Learning with recall (sDLr) approach and the main contribution of sDLr is that it can reduce the running time significantly. Extensive experiments on 4 datasets show that shrinking Deep Learning with recall can reduce training time significantly while still gives competitive classification performance.

\bibliographystyle{IEEEtran}
\bibliography{ref}

% Generated by IEEEtran.bst, version: 1.14 (2015/08/26)
\begin{thebibliography}{10}
\providecommand{\url}[1]{#1}
\csname url@samestyle\endcsname
\providecommand{\newblock}{\relax}
\providecommand{\bibinfo}[2]{#2}
\providecommand{\BIBentrySTDinterwordspacing}{\spaceskip=0pt\relax}
\providecommand{\BIBentryALTinterwordstretchfactor}{4}
\providecommand{\BIBentryALTinterwordspacing}{\spaceskip=\fontdimen2\font plus
\BIBentryALTinterwordstretchfactor\fontdimen3\font minus
  \fontdimen4\font\relax}
\providecommand{\BIBforeignlanguage}[2]{{%
\expandafter\ifx\csname l@#1\endcsname\relax
\typeout{** WARNING: IEEEtran.bst: No hyphenation pattern has been}%
\typeout{** loaded for the language `#1'. Using the pattern for}%
\typeout{** the default language instead.}%
\else
\language=\csname l@#1\endcsname
\fi
#2}}
\providecommand{\BIBdecl}{\relax}
\BIBdecl

\bibitem{hinton2006reducing}
G.~E. Hinton and R.~R. Salakhutdinov, ``Reducing the dimensionality of data
  with neural networks,'' \emph{Science}, vol. 313, no. 5786, pp. 504--507,
  2006.

\bibitem{bengio2009learning}
Y.~Bengio, ``Learning deep architectures for ai,'' \emph{Foundations and
  trends{\textregistered} in Machine Learning}, vol.~2, no.~1, pp. 1--127,
  2009.

\bibitem{krizhevsky2012imagenet}
A.~Krizhevsky, I.~Sutskever, and G.~E. Hinton, ``Imagenet classification with
  deep convolutional neural networks,'' in \emph{Advances in neural information
  processing systems}, 2012, pp. 1097--1105.

\bibitem{vincent2010stacked}
P.~Vincent, H.~Larochelle, I.~Lajoie, Y.~Bengio, and P.-A. Manzagol, ``Stacked
  denoising autoencoders: Learning useful representations in a deep network
  with a local denoising criterion,'' \emph{The Journal of Machine Learning
  Research}, vol.~11, pp. 3371--3408, 2010.

\bibitem{salakhutdinov2009deep}
R.~Salakhutdinov and G.~E. Hinton, ``Deep boltzmann machines,'' in
  \emph{International Conference on Artificial Intelligence and Statistics},
  2009, pp. 448--455.

\bibitem{hinton2002training}
G.~E. Hinton, ``Training products of experts by minimizing contrastive
  divergence,'' \emph{Neural computation}, vol.~14, no.~8, pp. 1771--1800,
  2002.

\bibitem{deng2012three}
L.~Deng, ``Three classes of deep learning architectures and their applications:
  a tutorial survey,'' \emph{APSIPA transactions on signal and information
  processing}, 2012.

\bibitem{lecun1998gradient}
Y.~LeCun, L.~Bottou, Y.~Bengio, and P.~Haffner, ``Gradient-based learning
  applied to document recognition,'' \emph{Proceedings of the IEEE}, vol.~86,
  no.~11, pp. 2278--2324, 1998.

\bibitem{lecun1995convolutional}
Y.~LeCun and Y.~Bengio, ``Convolutional networks for images, speech, and time
  series,'' \emph{The handbook of brain theory and neural networks}, vol. 3361,
  no.~10, 1995.

\bibitem{joachims1999making}
T.~Joachims, ``Making large scale svm learning practical,'' Universit{\"a}t
  Dortmund, Tech. Rep., 1999.

\bibitem{narasimhan2014fast}
J.~Narasimhan, A.~Vishnu, L.~Holder, and A.~Hoisie, ``Fast support vector
  machines using parallel adaptive shrinking on distributed systems,''
  \emph{arXiv preprint arXiv:1406.5161}, 2014.

\bibitem{wang2013lasso}
J.~Wang, J.~Zhou, P.~Wonka, and J.~Ye, ``Lasso screening rules via dual
  polytope projection,'' in \emph{Advances in Neural Information Processing
  Systems}, 2013, pp. 1070--1078.

\bibitem{bonnefoy2014dynamic}
A.~Bonnefoy, V.~Emiya, L.~Ralaivola, and R.~Gribonval, ``A dynamic screening
  principle for the lasso,'' in \emph{Signal Processing Conference (EUSIPCO),
  2014 Proceedings of the 22nd European}.\hskip 1em plus 0.5em minus
  0.4em\relax IEEE, 2014, pp. 6--10.

\bibitem{manyika2011big}
J.~Manyika, M.~Chui, B.~Brown, J.~Bughin, R.~Dobbs, C.~Roxburgh, and A.~H.
  Byers, ``Big data: The next frontier for innovation, competition, and
  productivity,'' 2011.

\bibitem{zheng2015closed}
S.~Zheng, X.~Cai, C.~H. Ding, F.~Nie, and H.~Huang, ``A closed form solution to
  multi-view low-rank regression.'' in \emph{AAAI}, 2015, pp. 1973--1979.

\bibitem{zheng2014kernel}
S.~Zheng and C.~Ding, ``Kernel alignment inspired linear discriminant
  analysis,'' in \emph{Joint European Conference on Machine Learning and
  Knowledge Discovery in Databases}.\hskip 1em plus 0.5em minus 0.4em\relax
  Springer Berlin Heidelberg, 2014, pp. 401--416.

\bibitem{williams2014tidewatch}
D.~Williams, S.~Zheng, X.~Zhang, and H.~Jamjoom, ``Tidewatch: Fingerprinting
  the cyclicality of big data workloads,'' in \emph{IEEE INFOCOM 2014-IEEE
  Conference on Computer Communications}.\hskip 1em plus 0.5em minus
  0.4em\relax IEEE, 2014, pp. 2031--2039.

\bibitem{zhang2012virtual}
X.~Zhang, Z.-Y. Shae, S.~Zheng, and H.~Jamjoom, ``Virtual machine migration in
  an over-committed cloud,'' in \emph{2012 IEEE Network Operations and
  Management Symposium}.\hskip 1em plus 0.5em minus 0.4em\relax IEEE, 2012, pp.
  196--203.

\bibitem{zheng2011analysis}
S.~Zheng, Z.-Y. Shae, X.~Zhang, H.~Jamjoom, and L.~Fong, ``Analysis and
  modeling of social influence in high performance computing workloads,'' in
  \emph{European Conference on Parallel Processing}.\hskip 1em plus 0.5em minus
  0.4em\relax Springer Berlin Heidelberg, 2011, pp. 193--204.

\bibitem{suykens1999least}
J.~A. Suykens and J.~Vandewalle, ``Least squares support vector machine
  classifiers,'' \emph{Neural processing letters}, vol.~9, no.~3, pp. 293--300,
  1999.

\bibitem{tibshirani1996regression}
R.~Tibshirani, ``Regression shrinkage and selection via the lasso,''
  \emph{Journal of the Royal Statistical Society. Series B (Methodological)},
  pp. 267--288, 1996.

\bibitem{gardner1985exponential}
E.~S. Gardner, ``Exponential smoothing: The state of the art,'' \emph{Journal
  of forecasting}, vol.~4, no.~1, pp. 1--28, 1985.

\bibitem{palm2012prediction}
R.~B. Palm, ``Prediction as a candidate for learning deep hierarchical models
  of data,'' \emph{Technical University of Denmark}, 2012.

\bibitem{vedaldi2014matconvnet}
A.~Vedaldi and K.~Lenc, ``Matconvnet-convolutional neural networks for
  matlab,'' \emph{arXiv preprint arXiv:1412.4564}, 2014.

\bibitem{krizhevsky2009learning}
A.~Krizhevsky and G.~Hinton, ``Learning multiple layers of features from tiny
  images,'' 2009.

\bibitem{baldi2014searching}
P.~Baldi, P.~Sadowski, and D.~Whiteson, ``Searching for exotic particles in
  high-energy physics with deep learning,'' \emph{Nature communications},
  vol.~5, 2014.

\bibitem{xiong2011bayesian}
H.~Y. Xiong, Y.~Barash, and B.~J. Frey, ``Bayesian prediction of
  tissue-regulated splicing using rna sequence and cellular context,''
  \emph{Bioinformatics}, vol.~27, no.~18, pp. 2554--2562, 2011.

\end{thebibliography}

% that's all folks
\end{document}